\useunder{\uline}{\ul}{}
\newtheorem{definition}{Definition}
\newtheorem{example}{Example}
\newtheorem{proposition}{Proposition}
\begin{document}
\begin{frontmatter}
    \title{Consistency-guided semi-supervised outlier detection in heterogeneous data using fuzzy rough sets}
    
    \author[inst1]{Baiyang Chen}\ead{farstars@qq.com}
    \author[inst1]{Zhong Yuan\corref{cor1}}\ead{yuanzhong@scu.edu.cn}
    \author[inst1]{Dezhong Peng}\ead{pengdz@scu.edu.cn}
    \author[inst2]{Xiaoliang Chen}\ead{chexiaol@iro.umontreal.ca} %\ead{xdxlchen@gmail.com}
    \author[inst3]{Hongmei Chen}\ead{hmchen@swjtu.edu.cn}
    
    \cortext[cor1]{Corresponding author}
    \affiliation[inst1]{organization={College of Computer Science, Sichuan University},city={Chengdu},postcode={610065},country={China}}
    %\affiliation[inst2]{organization={School of Computer and Software Engineering, Xihua University},city={Chengdu},postcode={610039},country={China}}
    \affiliation[inst2]{organization={Department of Computer Science and Operations Research, University of Montreal},city={Montreal},postcode={QC H3C3J7},country={Canada}}
    \affiliation[inst3]{organization={School of Computing and Artificial Intelligence, Southwest Jiaotong University},city={Chengdu},postcode={611756},country={China}}
    
    \begin{abstract}
        Outlier detection aims to find samples that behave differently from the majority of the data. 
        Semi-supervised detection methods can utilize the supervision of partial labels, thus reducing false positive rates. However, most of the current semi-supervised methods focus on numerical data and neglect the heterogeneity of data information. In this paper, we propose a consistency-guided outlier detection algorithm (COD) for heterogeneous data with the fuzzy rough set theory in a semi-supervised manner. 
        {First, a few labeled outliers are leveraged to construct label-informed fuzzy similarity relations. Next, the consistency of the fuzzy decision system is introduced to evaluate attributes' contributions to knowledge classification. Subsequently, we define the outlier factor based on the fuzzy similarity class and predict outliers by integrating the classification consistency and the outlier factor.}
        The proposed algorithm is extensively evaluated on 15 freshly proposed datasets. Experimental results demonstrate that COD is better than or comparable with the leading outlier detectors.\footnote{This manuscript is the accepted author version of a paper published by Elsevier. The final published version is available at \url{https://doi.org/10.1016/j.asoc.2024.112070}.}
        \end{abstract}
    
    \begin{keyword}
        Semi-supervised outlier detection \sep Fuzzy Rough Sets \sep Heterogeneous data \sep{Label-informed fuzzy similarity relation} \sep Classification consistency
        \end{keyword}
    
    \end{frontmatter}

%\linenumbers

\section{Introduction}

    Outlier detection (OD), also known as anomaly detection or novelty detection, is a process that identifies patterns or data points in a dataset that deviate from what is expected or normal. OD allows for the identification of unusual or unexpected behavior that may have important implications for the system being studied.
    Therefore, it has numerous applications in various domains such as fraud detection \cite{pourhabibi2020fraud}, software defect prediction \cite{jiang2022random}, industry control \cite{wang2019outlier}, medical anomaly detection \cite{Hawkins1981Identification}, etc. 
 
    Since outliers are often rare events and labeled data are frequently insufficient, many OD methods are designed to be unsupervised \cite{Breunig2000LOF, Chen2008Outlier, Jiang2009Some, zhang2009anew, Jiang2010An, Jiang2015Outlier, li2020copod, almardeny2022anovel, li2022robust, yuan2023MFGAD, Su2024GBFRD}. However, in some cases, it may be possible to obtain a limited amount of {labeled outliers to guide the detection process.} Hence, a number of semi-supervised detection algorithms \cite{Pang2018RAMODO, Zhao2018XGBOD, Pang2019DevNet, Pang2023PReNet, Ruff2020DeepSAD, Huang2021ESAD, Zhou2022FEAWAD} have been proposed in the last decades.
    However, most of them are mainly designed to deal with numerical data and neglect the heterogeneity of information. Real-world applications generally include heterogeneous attributes (referred to as mixed attribute data or simply mixed data) where the attributes of objects take various types of values \cite{Zhang2016mixeddata}. For instance, in fraud detection, the data may contain the gender and age of a customer, as well as the date and amount of a transaction. In this case, the attribute gender is nominal (categorical), the age is integer-valued, the transaction date is time-valued, and the transaction amount is real-valued (numerical). Detection of outliers in such a scenario {may be} more practical. 
    However, many detectors inherently regard mixed data as numerical, such that a handful of convenient measures can be {utilized} to extract features or data structures. Unfortunately, this may bring in extra properties that {do not exist.} Taking the attribute gender as an example, one may assign a number 0 to the male, and 1 to the female. Then we would obtain that the female is greater than the male, which does not make any sense and is likely to harm the performance of detection algorithms.

    The fuzzy rough set (FRS) theory \cite{dubois1990rough}, which unifies rough sets and fuzzy sets, is a popular mathematical model for processing data with uncertainty. It has been successfully applied to various domains including feature selection \cite{Zhang2016mixeddata, sang2023active} and outlier detection \cite{Yuan2023WFRDA, Chen2024MFIOD} in mixed data over the last years. 
    In FRS, the concepts of lower and upper approximations from classic Rough Sets are adapted to work within the framework of fuzzy logic, leading to fuzzy lower and upper approximations. This integration offers three advantages: (1) It provides a flexible framework for modeling data with uncertainty and imprecision by representing the boundaries of a concept with fuzzy membership functions. This allows for partial membership where elements can belong to a set to varying degrees, thus more precisely reflecting real-world scenarios. (2) It facilitates the direct processing of various data types, including numerical values, categorical variables, symbols, and more, without the necessity of data type transformation, thereby preserving the data's intrinsic diversity for subsequent analyses. (3) It enhances reasoning and decision-making in ambiguous situations, enabling the classification of objects even when their attribute values do not precisely match the criteria of a specific class, offering a robust approach to dealing with data ambiguity. 
    Therefore, FRS holds significant potential for identifying outliers in heterogeneous data with uncertainty and imprecision.

    The central idea of this paper can be described as follows. 
    Since the goal of outlier detection is to find the minority objects whose behavior is abnormal in data, any object less similar to the others has a higher probability of being an outlier. 
    From the perspective of FRS, the similarity is reflected by the fuzzy similarity class, which can describe how similar an instance is to others. Therefore, this paper employs the fuzzy similarity class to characterize outliers and uses the unlabeled data to construct the outlier factor for each object based on fuzzy similarity classes.
    Then, with a few labeled data, we formulate a fuzzy decision system, where the unlabeled data constitute the conditional attributes (i.e., features) and the class labels form the decision attribute. In this context, given a set of conditional attributes, we can take some measure (e.g., dependency) from the FRS theory to evaluate its classification consistency with the decision. If an attribute set has greater consistency, then it is of high quality for producing a more separable classification.
    {Finally, we combine the outlier factor and the classification consistency to predict outliers in the dataset. }
    
    With the above ideas, we propose a consistency-guided outlier detection algorithm (COD) based on FRS in a semi-supervised manner. The contributions of this paper include:
    \begin{itemize}
        \item {This paper introduces a novel label-informed fuzzy similarity relation for the representation of heterogeneous data. It enables a downstream task-guided approach to determine the optimal fuzzy radius for mixed data from a wide range of applications.}
        \item {We propose to characterize outliers with the fuzzy similarity class and use the classification consistency to guide the scoring of outliers, which improves the accuracy and efficiency of outlier detection with very limited labeled data.}
        \item To our best knowledge, we are the first to propose a novel FRS-based outlier detection model with a semi-supervised approach in mixed data. {It may have the potential to advance the application of the FRS theory in real-world scenarios.}
        \item Extensive experiments on various types of public datasets demonstrate that the proposed algorithm {is better than or comparable with the state-of-the-art methods.}
        \end{itemize}

\section{Related works}
    Depending on the availability of labeled data on which the algorithm relies, outlier detection methods are broadly classified into unsupervised \cite{2020ODsurvey}, semi-supervised \cite{Jiang2023WSAD} and supervised detection algorithms \cite{han2022ADbench}. This paper focuses on semi-supervised outlier detection (SSOD) methods for tabular data that is organized in rows and columns of a table. In particular, we mainly investigate the rank-based, representation learning-based, active learning-based, and reinforcement learning-based detection approaches. It is notable that some methods may belong to more than one group or may combine elements from different groups.

    Rank-based SSOD methods typically leverage a limited number of labeled samples to train a ranking model that scores the outlier degree of all objects.
    For instance, Pang et al. (2019) \cite{Pang2019DevNet} introduce an end-to-end deep framework (DevNet) to learn outlier scores, and incorporate a Gaussian prior and deviation loss to detect outliers.
    Inspired by DevNet, the model PReNet \cite{Pang2023PReNet} ranks the outliers by ordinal regression without involving any assumptions about the probability distribution of outlier scores.
    Besides, Zhou et al. (2022) \cite{Zhou2022FEAWAD} propose a weakly supervised outlier detection model (FEAWAD), which attempts to utilize a deep autoencoder to fit the normal data. The resulting representations are then leveraged to facilitate outlier score learning.
    {Lately, Stradiotti et al. (2024) \cite{Stradiotti2024SSIF} introduce the Semi-Supervised Isolation Forest (SSIF) that integrates labeled and unlabeled data within a probabilistic framework, enhancing performance by leveraging informative split distributions and computing anomaly scores based on labeled instances within tree leaves.}
    
    Representation learning-based detection models can be viewed as an indirect way of learning anomaly scores, and they usually improve unsupervised representation learning by partially available labeled data.
    The earlier solution (e.g., OE \cite{2014OE}, XGBOD \cite{Zhao2018XGBOD}) uses multiple unsupervised detection algorithms to learn useful representations and appends their output anomaly scores to the original features for training a supervised classifier.
    Recent deep learning-based SSOD methods employ end-to-end frameworks to extract anomaly-oriented representations for discovering outliers. 
    One such model is DeepSAD by Ruff et al. (2020) \cite{Ruff2020DeepSAD}. It extends the unsupervised outlier detection algorithm DeepSVDD \cite{Ruff2018DeepSVDD} so that it can make use of labeled data. Moreover, DeepSAD penalizes the inverse of the distances of outliers' representation such that they are projected to further away from the center of a hypersphere where normal data reside.     
    Following DeepSAD, Huang et al. (2021) \cite{Huang2021ESAD} leverage the mutual information between data and their representations as well as an entropy measure, and devise an encoder-decoder-encoder architecture to optimize a KL-divergence-based objective function for semi-supervised outlier detection. 
    The other detection method REPEN \cite{Pang2018RAMODO} uses a ranking-based approach to learn feature representations of ultra-high dimension data and optimizes the detection method based on random distances. A small number of labeled outliers are leveraged to refine the sampling for the training triplets, and help the model learn anomaly-oriented representations.
    {Recently, Zhu et al. (2024) \cite{Zhu2024OSAD} propose Anomaly Heterogeneity Learning (AHL) to simulate diverse heterogeneous anomaly distributions from limited anomaly examples, allowing for detecting both seen and unseen anomalies more effectively than traditional closed-set approaches.}
    Another line of research based on representation learning involves the use of generative adversarial networks (GAN) to improve model training and/or data augmentation.
    Among them, Tian et al. (2022) \cite{Tian2022AA-BiGAN} propose the anomaly-aware bidirectional GAN model (AA-BiGAN) based on the BiGAN \cite{Donahue2016BiGAN} architecture. It uses labeled outliers to learn a probability distribution that is guaranteed to assign low-density values to the collected anomalies.
    Li et al. (2022) \cite{Li2022Dual-MGAN} integrates multiple GANs to realize reference distribution construction and data augmentation for detecting both discrete and grouped anomalies.
    {Liu et al. (2024) \cite{Liu2024Mutual} address challenges in semi-supervised anomaly detection by regularizing the latent representations learned by deep generative models using mutual information maximization, thereby improving separation between normal and abnormal samples.}

    In addition to the above-mentioned approaches, some researchers have adopted active learning and reinforcement learning for SSOD. Active learning-based methods focus on acquiring more informative labeled data with the aid of humans. 
    In one of these works, Gornitz et al.\cite{Gornitz2013SSAD} propose a semi-supervised anomaly detection (SSAD) algorithm that follows the unsupervised learning paradigm, and additionally devises an active learning strategy that simply chooses borderline points for labeling. Another important method that leverages active learning is active anomaly discovery (AAD) \cite{Das2016AAD}. It greedily selects the most likely abnormal samples for labeling and maximizes the number of true outliers under a query budget.
    Reinforcement learning-based SSOD models usually consider the discovery of outliers as a sequential decision process, where human feedback is utilized to help identify more outliers.
    For instance, the model Meta-AAD \cite{Zha2020Meta-AAD} employs deep reinforcement learning to optimize a meta-policy %for selecting samples that are similar to known outliers, 
    to select the most appropriate samples for manual labeling, and optimizes the number of outliers discovered throughout the querying process. Unlike Meta-AAD, Pang et al. \cite{Pang2021DPLAN} try to explore unlabeled data without human help. They design an anomaly-biased simulation environment to enable an RL-based model to discover known and unknown outliers and develop a deep Q-learning-based detection model DPLAN.   
    {Chen et al. (2024) \cite{Chen2024DADS} introduce the Deep Anomaly Detection and Search (DADS) that integrates reinforcement learning to model a Markov decision process for searching possible anomalies in unlabeled data, effectively leveraging both labeled and unlabeled data to enhance detection performance.}

\section{Preliminaries}
    In information processing systems, data is usually stored in a table (also called an information system, information table, etc.), where every row corresponds to an object (sample), and every column denotes an attribute (feature). {This section reviews some fundamental concepts of FRS that help understand the subsequent sections of this paper.}
    
    \begin{definition}
        An information system is a tuple $(U, A)$, where $U=\{x_1,x_2,\ldots,x_n\}$ is the set of objects, also referred to as the universe of discourse, and $A=\{a_1,a_2,\ldots, a_m\}$ is the set of attributes that every object has. 
	  \end{definition}
     
   \begin{definition}
        Given an information system $(U, A)$. If $\widetilde{X}$ is a map from $U$ to $[0,1]$, then $\widetilde{X}$ is a fuzzy set on $U$, i.e. $\widetilde{X}:U\to [0,1]$.
	\end{definition}

    $\forall x_i \in U$, $\widetilde{X}(x_i)$ is the membership of $x_i$ to $\widetilde{X}$, or the membership function of $\widetilde{X}$. The fuzzy set is often denoted by $\widetilde{X}=\left(\widetilde{X}({{x}_{1}}),\widetilde{X}({{x}_{2}}),\ldots ,\widetilde{X}({{x}_{n}})\right)$, and the fuzzy cardinality of  $\widetilde{X}$ is computed by $|\widetilde{X}|=\sum\limits_{i}{{\widetilde{X}}({x}_{i})}$.
 
    \begin{definition}
        Let $U$ be a set of objects, a fuzzy relation $\widetilde{R}$ on $U$ is defined as {a family of fuzzy sets} $\widetilde{R}: U\times U \rightarrow [0,1]$.
        \end{definition}
     
    Some commonly employed operations of fuzzy relations are listed as follows.
    \begin{enumerate}[\indent(1)]
        \item $\widetilde{R}_1=\widetilde{R}_2 \Leftrightarrow \forall (x_i, x_j)\in U\times U, \widetilde{R}_1(x_i, x_j)=\widetilde{R}_2(x_i, x_j)$;
        \item $\widetilde{R}_1\subseteq \widetilde{R}_2 \Leftrightarrow \forall (x_i, x_j)\in U\times U, \widetilde{R}_1(x_i, x_j)\le \widetilde{R}_2(x_i, x_j)$;
        \item $(\widetilde{R}_1\cup\widetilde{R}_2)(x_i, x_j)=\max\left\{\widetilde{R}_1(x_i, x_j), \widetilde{R}_2(x_i, x_j)\right\}$;
        \item ${(\widetilde{R}_1\cap \widetilde{R}_2)}(x_i, x_j)=\min\left\{\widetilde{R}_1(x_i, x_j),\widetilde{R}_2(x_i, x_j)\right\}$.
	\end{enumerate}
    
    $\forall (x_i,x_j)\in U\times U$, the membership $\widetilde{R}(x_i,x_j)$ expresses the degree to which $x_i$ has a relation $\widetilde{R}$ with $x_j$. A fuzzy relation $\widetilde{R}$ on $U$ is usually denoted by a fuzzy relation matrix $M({\widetilde{R}})=(r_{ ij})_{n\times n}$, where $r_{ij}=\widetilde{R}(x_i,x_j)$.
    $\forall x_1, x_2, x_3 \in U$, if a fuzzy relation $\widetilde{R}$ meets: (1) reflexive: $\widetilde{R}(x_1, x_1)=1$, (2) symmetric: $\widetilde{R}(x_1, x_2)=\widetilde{R}(x_2, x_1)$, then $\widetilde{R}$ is also called a fuzzy similatrity relation. %If $\widetilde{R}$ further meets: (3) transitive: $\widetilde{R}(x_1, x_2)\ge \min(\widetilde{R}(x_1, x_3),\widetilde{R}(x_3, x_2))$,  then $\widetilde{R}$ is also called a fuzzy equivalence relation. 

    In Fuzzy Rough Sets, the tuple $(U,\widetilde{R})$ is called a fuzzy approximation space \cite{dubois1990rough}, where the approximation of a fuzzy set can be defined.

    \begin{definition}\label{def:low_appr}
        Let $(U,\widetilde{R})$ be a fuzzy approximation space, and $\widetilde{X}$ is a fuzzy set on $U$,  the lower approximation of $\widetilde{X}$ is a fuzzy set. Its membership function is defined as
	\begin{equation}\label{eq_low_appr}
            \underline{\widetilde{R}}\widetilde{X}(x_i)=\underset{x_j\in U}{\mathop{\inf }}\,\max \bigg\{1-\widetilde{R}(x_i,x_j), \widetilde{X}(x_j)\bigg\}.
  		\end{equation}
        \end{definition}
    
    The fuzzy lower approximations $\underline{\widetilde{R}}\widetilde{X}$ of $\widetilde{X}$ are used to express the degrees of elements certainly belonging to $\widetilde{X}$.
    %, and it is also referred to as the fuzzy positive region.
    %It is easy to see that the fuzzy approximation space can degrade to the corresponding Pawlak approximation space when the equivalence relation and the object set to be approximated are both crisp. 
    Specifically, when the fuzzy set to be approximated is crisp, Equation~(\ref{eq_low_appr}) can be rewritten as
    \begin{equation}\label{eq_low_appr_crisp}
        \underline{\widetilde{R}}X(x_i)=\underset{x_j\notin X}{\mathop{\inf }}\,\{1-\widetilde{R}(x_i,x_j)\}.
        \end{equation}

    Given a fuzzy approximation space $(U, \widetilde{R})$, the fuzzy similarity relation $\widetilde{R}$ can induce a generalized fuzzy partition of $U$, i.e., a set of fuzzy similarity classes which are constructed by collecting a group of fuzzy targets. 
    \begin{definition}\cite{yuan2022FRGOD}\label{def:partition}
        {The generalized fuzzy partition of $U$ induced} by a fuzzy similarity relation $\widetilde{R}$ is defined as
        \begin{equation}
            U/{\widetilde{R}}=\left\{[x_i]_{\widetilde{R}}\right\}_{x_i\in U},
            \end{equation}
        where $[x_i]_{\widetilde{R}}=\left(r_{i1}^{\widetilde{R}}, r_{i2}^{\widetilde{R}}, \dots, r_{in}^{\widetilde{R}}\right)$ is a fuzzy similarity class containing $x_i$. 
        \end{definition}
    {Each} fuzzy similarity class $[x_i]_{\widetilde{R}}$ is a fuzzy set, which clearly describes how similar the object $x_i$ is to all objects in $U$. The family of $U/\widetilde{R}$ forms a fuzzy concept system on $U$ \cite{hu2006fuzzy}.
    Obviously, $[x_i]_{\widetilde{R}}(x_j)=\widetilde{R}(x_i,x_j)=r^{\widetilde{R}}_{ij}$. If $\widetilde{R}(x_i,x_j)=1$, then it suggests that $x_j$ certainly belongs to $[x_i]_{\widetilde{R}}$; If $\widetilde{R}(x_i,x_j)=0$, then $x_j$ definitely does not belong to $[x_i]_{\widetilde{R}}$. The fuzzy cardinality of $[x_i]_{\widetilde{R}}$ is calculated by
    \begin{equation}\label{eq_cardinality}
        \big|[x_i]_{\widetilde{R}}\big|=\sum\limits_{j=1}^{n}{{\widetilde{R}}({{x}_{i}},{{x}_{j}})}.
        \end{equation}
    We can easily obtain $1\le \big|[x_i]_{\widetilde{R}}\big|\le n$. {The cardinality of $[x_i]_{\widetilde{R}}$ reflects the overall similarity of the object $x_i$ to others in $U$ based on the knowledge $\widetilde{R}$.}

    In the application of Fuzzy Rough Sets, the information system $(U, A)$ is expressed as a fuzzy information system. If $A=C \cup d$ and $C\cap d=\emptyset$, the fuzzy information system is also referred to as a fuzzy decision system defined as
    \begin{definition}\label{def:FDS}
        A fuzzy decision system is a tuple $(U, C\cup d)$, where $U=\{x_1,x_2,\ldots,x_n\}$ is the set of objects, $C$ represents the conditional attributes and $d$ denotes the decision attribute.
        \end{definition}

    \begin{example}\label{ex:low_appr}
        Let $U=\{x_1,x_2,x_3\}$ be the set of objects, $A=\{a_1,a_2\}$ is the set of attributes, and the fuzzy similarity relation matrices on $U$ induced from the attribute $a_1$ and $a_2$ respectively are
        
        $M({\widetilde{a}_1})=\left(\begin{array}{ccc}
            1 & 0 & 0.9\\
            0 & 1 & 0\\
            0.9 & 0 & 1\\
            \end{array}\right)$, 
        $M({\widetilde{a}_2})=\left(\begin{array}{ccc}
            1 & 0.8 & 0\\
            0.8 & 1 & 0.9\\
            0 & 0.9 & 1\\
            \end{array}\right)$.\\
        Then, the fuzzy similarity class $[x_1]_{\widetilde{a}_1}$ generated by $\widetilde{a}_1$ containing $x_1$ is $(1,0,0.9)$, and $\big|[x_1]_{\widetilde{a}_1}\big|=1.9$. Similarly, $[x_2]_{\widetilde{a}_1}=(0,1,0)$, $\big|[x_2]_{\widetilde{a}_1}\big|=1$, $[x_3]_{\widetilde{a}_1}=(0
        .9,0,1)$, $\big|[x_3]_{\widetilde{a}_1}\big|=1.9$.
        
        Let $\widetilde{X}=(0.2,0.5,0.8)$ be a fuzzy set on $U$, then the fuzzy lower approximation $\underline{\widetilde{a}_1}\widetilde{X}$ of $\widetilde{X}$ with regard to ${\widetilde{a}_1}$ can be computed using Eq.~(\ref{eq_low_appr}) as
        
        $\max\big(1-\widetilde{a}_1(x_1,x_j),\widetilde{X}(x_j)\big)\big|_{x_j=x_1}=\max(0,0.2)=0.2$,
        
        $\max\big(1-\widetilde{a}_1(x_1,x_j),\widetilde{X}(x_j)\big)\big|_{x_j=x_2}=\max(1,0.5)=1$,
        
        $\max\big(1-\widetilde{a}_1(x_1,x_j),\widetilde{X}(x_j)\big)\big|_{x_j=x_3}=\max(0.1,0.8)=0.8$.\\
        Then we have $\underline{\widetilde{a}_1}\widetilde{X}(x_1)=\underset{x_j\in U}{\mathop{\inf }}\,\max\big(1-\widetilde{a}_1(x_1,x_j),\widetilde{X}(x_j)\big)=0.2$.
        Similarly, we can obtain
        
        $\underline{\widetilde{a}_1}\widetilde{X}(x_2)=0.5$, $\underline{\widetilde{a}_1}\widetilde{X}(x_3)=0.2$. Therefore, $\underline{\widetilde{a}_1}\widetilde{X}=(0.2, 0.5, 0.2)$.
        \end{example}

    {Having illustrated the basic calculations in FRS through the preceding example, we will now proceed to an in-depth exploration of our methods.}
        
\section{Methodology}
    {This section presents the consistency-guided outlier detection (COD) method in detail. First, we leverage a few labeled outliers to construct label-informed fuzzy similarity relations, which enable a flexible and adaptable representation of heterogeneous data. Next, the consistency of the fuzzy decision system is introduced to assess attributes' contributions to knowledge classification. Subsequently, we define the outlier factor based on the fuzzy similarity class and predict outlier scores by integrating the classification consistency and the outlier factor. Finally, a threshold is determined for binary outlier predictions. The overall framework of COD is illustrated in Figure \ref{fig_COD}.}
    
    \begin{figure*}[!h]
        \centering
        \includegraphics[width=\textwidth]{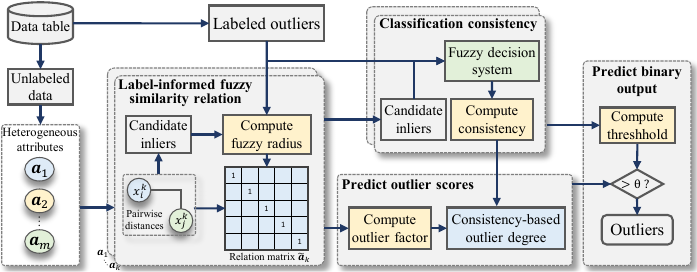}
        \caption{Overall framework of COD.}
        \label{fig_COD}
        \end{figure*}
        
\subsection{Label-informed fuzzy similarity relation} % fuzzy similarity relation
    {This part begins with data preprocessing, and then constructs the fuzzy similarity relation for each attribute with the guidance of labeled data, which is termed the label-informed fuzzy similarity relation.}
    
    As real-world datasets are usually heterogeneous, this paper recognizes two broad types of data, i.e., the numerical data including real-valued and integer-valued numbers, and the nominal data including categorical and symbolic data. Since the magnitude and dimension of raw numerical values are diverse, the numerical data are first normalized into the interval of $[0, 1]$ by the min-max normalization {during data preprocessing}. 
    
    In order to jointly represent both numerical and nominal values, we adopt a {hybrid fuzzy membership function \cite{yuan2022FRGOD} to construct} the fuzzy similarity relation between each pair of objects in the datasets. 
    {Let $f_i^k$ be the value of attribute $a_k$ for object $x_i$,
    and $d_{ij}^k=\left|f_i^k - f_j^k\right|$} is the {distance} between $x_i$ and $x_j$ on the attribute $a_k$, the fuzzy similarity relation $\widetilde{a}_k(x_i,x_j)$ between $x_i$ and $x_j$ induced by the attribute $a_k$ is calculated by
    \begin{equation}\label{eq_rel_ak}
        \widetilde{a}_k(x_i,x_j)=\left\{
        \begin{array}{ll}
        1,          & \text{if } f_i^k=f_j^k, a_k \text{ is nominal};\\
        0,          & \text{if } f_i^k\neq f_j^k, a_k \text{ is nominal};\\
        1-d_{ij}^k, & \text{if } d_{ij}^k\le \lambda_k, a_k \text{ is numerical};\\
        0,          & \text{if } d_{ij}^k > \lambda_k, a_k \text{ is numerical};\\
        \end{array}\right.
        \end{equation}
    {where $\lambda_k\in [0,1]$ is the fuzzy radius, which is usually determined empirically \cite{Yuan2023WFRDA} or by some heuristic rules (e.g., coverage and specificity \cite{Pedrycz2016Design}) in unsupervised scenarios. However, in the label-informed setting, we are able to design a downstream task-guided approach to determine their values with a few labeled objects. The optimal $\lambda_k$ is derived through the maximization of the following equation.
    \begin{equation}\label{eq_lamb}
        \lambda_k=\arg\max_{\lambda} \frac{\sum_{x_i\in U^-}\left|[x_i]_{\widetilde{a}_k}^\lambda\right|}{|U^-|\cdot |U|} - \frac{\sum_{x_j\in U^+}\left|[x_j]_{\widetilde{a}_k}^\lambda\right|}{|U^+|\cdot |U|},
        \end{equation}
    where $U^-$ and $U^+$ denote the subset of the negative instances and the positive instances in $U$, respectively. The cardinality of the fuzzy similarity classes $[x_i]$ reflects the overall similarity of a negative object $x_i$ to all others in $U$ based on the knowledge ${\widetilde{a}_k}$, and likewise, the cardinality of $[x_j]$ reflects that of a positive object $x_j$. The objective is to find an optimal value of $\lambda$ by balancing the average cardinality of labeled outliers and normal points. Given the assumption that outliers are less similar to others than the majority of data, the objects in $U^-$ exhibit higher similarity to each other. Consequently, the cardinality $|[x_i]_{\widetilde{a}_k}^\lambda\|$ should be greater. Conversely, outliers in $U^+$ tend to be less similar to the majority of $U$, resulting in smaller cardinality $|[x_j]_{\widetilde{a}_k}^\lambda|$. %The optimal $\lambda_k$ for each attribute $a_k$ can be obtained by Dynamic programming or Generic Algorithms.

    The label-informed fuzzy similarity relation allows for modeling heterogeneous data unitedly and efficiently. However, in case normal objects in $U^-$ are not explicitly identified, it is necessary to designate certain instances as candidate negative instances. Previous works \cite{Pang2023PReNet, Zhou2022FEAWAD, Pang2018RAMODO} simply treat unlabeled data as inliers under the assumption that outliers are often rare events. However, this approach may not work well in cases where the outlier contamination level is high (i.e., the proportion of outliers in the unlabeled data), potentially introducing noise. 
    This paper introduces a more practical solution for selecting candidate inliers.  
    
    Let $d_{ij}^k$ be the pairwise distance between $x_i$ and $x_j$ with respect to the attribute $a_k$. Then the average distance of $x_i$ to others in $U$ is $D_i^k=\frac{\sum_{x_j\in U}d_{ij}^k}{|U|}$. We first sort the objects in {ascending} order based on their average distances as {$D_{i_1}^k \leq D_{i_2}^k \leq \ldots \leq D_{i_{|U|}}^k$}.
    Then, the set of candidate inliers is defined as
    \begin{equation}\label{eq_n_neg}
        U^- = \left\{x_{i_k} \mid k = 1, 2, \ldots, N_-\right\}.
        \end{equation}
    
    This selection process aims to identify the top $N_-$ objects with the greatest average distance, reflecting their greater similarity to others, and designates them as candidate inliers.

    Given an attribute subset $B\subseteq C$, the fuzzy similarity relation $\widetilde{B}$ on $U$ can be derived from Equation~(\ref{eq_rel_ak}) using conjunction operation \cite{Yuan2021ARsurvey} as
    \begin{equation}\label{eq_rel_B}
        \widetilde{B}(x_i, x_j)=\min_{a_k\in B}{{\widetilde{a}_k}(x_i, x_j)}. 
    	\end{equation}
    \begin{proposition}\label{prop_subset}
        For any attribute subset $ B, P\subseteq C$, if $B\subseteq P$, then $\widetilde{P} \subseteq \widetilde{B}$.
    \end{proposition}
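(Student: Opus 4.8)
The plan is to unfold both sides of the claimed inclusion through the defining Equation~(\ref{eq_rel_B}) and then reduce everything to the elementary fact that the minimum of a finite family of numbers can only decrease when the family is enlarged. By the second operation in the list of fuzzy-relation operations, establishing $\widetilde{P}\subseteq\widetilde{B}$ amounts to showing
\begin{equation*}
\widetilde{P}(x_i,x_j)\le\widetilde{B}(x_i,x_j)\quad\text{for every }(x_i,x_j)\in U\times U.
\end{equation*}
So I would fix an arbitrary pair $(x_i,x_j)$ and work with the two scalars $\widetilde{B}(x_i,x_j)=\min_{a_k\in B}\widetilde{a}_k(x_i,x_j)$ and $\widetilde{P}(x_i,x_j)=\min_{a_k\in P}\widetilde{a}_k(x_i,x_j)$ furnished by Equation~(\ref{eq_rel_B}).

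Next, since $B\subseteq P$, every attribute contributing to the family $\{\widetilde{a}_k(x_i,x_j)\}_{a_k\in B}$ also contributes to the family $\{\widetilde{a}_k(x_i,x_j)\}_{a_k\in P}$; hence the minimum of the larger family is no larger than that of the smaller one. Concretely, let $a_{k_0}\in B$ realise the minimum over $B$, i.e. $\widetilde{a}_{k_0}(x_i,x_j)=\widetilde{B}(x_i,x_j)$; then $a_{k_0}\in P$, so $\widetilde{P}(x_i,x_j)=\min_{a_k\in P}\widetilde{a}_k(x_i,x_j)\le\widetilde{a}_{k_0}(x_i,x_j)=\widetilde{B}(x_i,x_j)$. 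Since $(x_i,x_j)$ was arbitrary, the inclusion $\widetilde{P}\subseteq\widetilde{B}$ follows.

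There is essentially no hard step here; the only items worth making explicit are that the minima are well-defined (this holds because $C$ is finite and the subsets in question are assumed non-empty) and that each $\widetilde{a}_k(x_i,x_j)\in[0,1]$, so that $\widetilde{B}$ and $\widetilde{P}$ are indeed fuzzy relations on $U$. If one wishes to cover the degenerate case $B=\emptyset$, one adopts the usual convention $\min_{a_k\in\emptyset}(\cdot)=1$ and the bound $\widetilde{P}(x_i,x_j)\le 1$ is immediate. Thus the ``obstacle'', such as it is, is purely a matter of stating conventions rather than any genuine mathematical difficulty.
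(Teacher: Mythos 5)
Your proof is correct and takes exactly the route the paper intends: the paper states Proposition~\ref{prop_subset} without proof, treating it as immediate from Equation~(\ref{eq_rel_B}), and your argument—enlarging the attribute set enlarges the family over which the minimum is taken, so the value can only decrease pointwise, which by operation (2) is precisely the inclusion $\widetilde{P}\subseteq\widetilde{B}$—is that intended justification spelled out. Nothing further is needed.
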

    Proposition \ref{prop_subset} expresses the inclusion relation between fuzzy relations induced by attributes: the more attributes adopted, the smaller (fine-grained) the fuzzy relation. Notably, given any $x_i$ and $x_j$ in $U$, if the fuzzy relation $\widetilde{P}$ is sufficiently fine-grained, the degree to which $x_i$ has a relation $\widetilde{P}$ with $x_j$ approaches 0. In other words, every object in $U$ will be distinct from each other under the knowledge of $\widetilde{P}$.}

\subsection{Classification consistency}
    {
    In a fuzzy decision system $(U, C\cup d)$, both the condition attributes in $C$ and the decision attribute $d$ can induce a fuzzy partition, which inherently indicates a knowledge classification. To evaluate attributes’ contributions to knowledge classification, we introduce classification consistency based on the notion of fuzzy dependency in the following.

    \begin{definition}\cite{Jensen2007fuzzy} \label{def:dependency}
        The fuzzy dependency $\gamma_{C}(d)$ of the decision attribute $d$ on the conditional attributes $C$ is
    \begin{equation}\label{eq_dependency}
        \gamma_{C}(d)=\frac{\left|\bigcup_{\widetilde{X}\in U/\widetilde{d}}\underline{\widetilde{C}}\widetilde{X}\right|}{|U|}.
        \end{equation}
        \end{definition}}
    It is obvious that $0<\gamma_{C}(d)\leq1$, and $\gamma_{C}(d)=1$ means that the decision $d$ can be approximated accurately by the attributes  $C$. 
    In the application of outlier detection, all objects in $U$ are partitioned into two imbalanced crisp classes by the decision attribute $d$ as 
    \begin{equation}\label{eq_partition_d}
            U/d =\left\{[x^-]_{d}, [x^+]_{d} \right\},
            \end{equation}
    where $[x^-]_{d}$ and $[x^+]_{d}$ represent the two crisp equivalence classes i.e., the negative class and the positive class, respectively. In this context, given an conditional attribute subset $B\subseteq C$, the fuzzy dependency between $B$ and $d$ is computed by
        \begin{equation}\label{eq_gamma}
            \gamma_{B}(d) =\frac{\left|POS_{B}(d) \right|}{\left|U\right|}
            =\frac{\left|\underline{\widetilde{B}}[x^-]_{d}\cup \underline{\widetilde{B}}[x^+]_{d}\right|}{\left|U\right|}.
            \end{equation}
    %As follows, we give the definition of consistency for outlier detection.

    To address the imbalance problem in the task of outlier detection, we define the consistency based on the fuzzy dependency {through balancing the weights between the two classes}.

\begin{definition}\label{def:consistency}
    Given an attribute subset $B \subseteq C$, the consistency $\xi_{B}(d)$ of the decision attribute $d$ on the conditional attributes $B$ is defined as
    \begin{equation}\label{eq_consistency}
    \xi_{B}(d) =\frac{\left|\underline{\widetilde{B}}[x^-]_{d}\right|}
    {\left|U^-\right|} + \frac{\left|\underline{\widetilde{B}}[x^+]_{d}\right|} {\left|U^+\right|}.
    \end{equation}
    \end{definition}

    {
    \begin{proposition}\label{prop_dependency}
        $\forall B, P\subseteq C$, if $B\subseteq P$, then $\xi_{B}(d)\leq \xi_{P}(d)$.
    \end{proposition}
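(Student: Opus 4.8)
The plan is to reduce the statement to two ingredients: the antimonotonicity of the fuzzy similarity relation in the attribute set, which is exactly Proposition~\ref{prop_subset}, and the monotonicity of the fuzzy lower approximation with respect to the underlying relation. First I would invoke Proposition~\ref{prop_subset}: from $B\subseteq P$ we obtain $\widetilde{P}\subseteq\widetilde{B}$, that is, $\widetilde{P}(x_i,x_j)\le\widetilde{B}(x_i,x_j)$ for every pair $(x_i,x_j)\in U\times U$.

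Next I would establish the key auxiliary fact: if $\widetilde{R}_1\subseteq\widetilde{R}_2$, then $\underline{\widetilde{R}_2}\widetilde{X}(x_i)\le\underline{\widetilde{R}_1}\widetilde{X}(x_i)$ for every fuzzy set $\widetilde{X}$ on $U$ and every $x_i\in U$. This follows immediately from Definition~\ref{def:low_appr}: since $\widetilde{R}_1(x_i,x_j)\le\widetilde{R}_2(x_i,x_j)$ gives $1-\widetilde{R}_1(x_i,x_j)\ge 1-\widetilde{R}_2(x_i,x_j)$, we have $\max\{1-\widetilde{R}_1(x_i,x_j),\widetilde{X}(x_j)\}\ge\max\{1-\widetilde{R}_2(x_i,x_j),\widetilde{X}(x_j)\}$ for each $x_j$, and taking the infimum over $x_j\in U$ preserves the inequality. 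Applying this with $\widetilde{R}_1=\widetilde{P}$ and $\widetilde{R}_2=\widetilde{B}$ (the roles are forced by $\widetilde{P}\subseteq\widetilde{B}$) yields $\underline{\widetilde{B}}\widetilde{X}(x_i)\le\underline{\widetilde{P}}\widetilde{X}(x_i)$ pointwise, and summing over $x_i\in U$ gives $\big|\underline{\widetilde{B}}\widetilde{X}\big|\le\big|\underline{\widetilde{P}}\widetilde{X}\big|$ by the definition of fuzzy cardinality.

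Finally I would instantiate $\widetilde{X}$ at the two crisp decision classes $[x^-]_d$ and $[x^+]_d$ from Equation~(\ref{eq_partition_d}), obtaining $\big|\underline{\widetilde{B}}[x^-]_d\big|\le\big|\underline{\widetilde{P}}[x^-]_d\big|$ and $\big|\underline{\widetilde{B}}[x^+]_d\big|\le\big|\underline{\widetilde{P}}[x^+]_d\big|$. Because $|U^-|$ and $|U^+|$ are strictly positive constants independent of the chosen attribute subset, dividing each inequality by the corresponding cardinality and adding the two yields $\xi_{B}(d)\le\xi_{P}(d)$ through Definition~\ref{def:consistency}. There is no genuine obstacle here; the only point needing care is keeping the direction of the inequalities straight — a larger attribute set induces a finer (pointwise smaller) relation but, correspondingly, a larger lower approximation and hence a larger consistency — so I would state the monotonicity lemma of the lower approximation explicitly rather than rely on it implicitly.
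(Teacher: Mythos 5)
Your proposal is correct and follows essentially the same route as the paper's own proof: invoke Proposition~\ref{prop_subset} to get $\widetilde{P}\subseteq\widetilde{B}$, use the monotonicity of the fuzzy lower approximation (via the inf-max formula of Definition~\ref{def:low_appr} applied to the crisp classes $[x^-]_d$ and $[x^+]_d$), then divide by $|U^-|$ and $|U^+|$ and add. The only difference is cosmetic: you state the lower-approximation monotonicity as an explicit lemma for arbitrary fuzzy sets, while the paper applies it directly to the two decision classes.
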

    \begin{proof}
        Given $B\subseteq P$, according to Proposition \ref{prop_subset},  we have $\widetilde{B} \supseteq \widetilde{P}$. Therefore, $\forall{x_i, x_j\in U}$, we can obtain $1-\widetilde{B}(x_i,x_j)\leq 1-\widetilde{P}(x_i,x_j)$ . Let $[x^-]_{d}$ and $[x^+]_{d}$ be the negative class and the positive class of $U$, by Definition \ref{def:low_appr} and Eq.~(\ref{eq_low_appr_crisp}), we have $\underline{\widetilde{B}}[x^-]_{d}\subseteq \underline{\widetilde{P}}[x^-]_{d}$, and  $\underline{\widetilde{B}}[x^+]_{d}\subseteq \underline{\widetilde{P}}[x^+]_{d}$. Further,  $\frac{\left|\underline{\widetilde{B}}[x^-]_{d}\right|}{|U^-|}\leq \frac{\left|\underline{\widetilde{P}}[x^-]_{d}\right|}{|U^-|}$, and  $\frac{\left|\underline{\widetilde{B}}[x^+]_{d}\right|}{|U^+|}\leq \frac{\left|\underline{\widetilde{P}}[x^+]_{d}\right|}{|U^+|}$. Therefore,  $\xi_{B}(d)\leq \xi_{P}(d)$.
    \end{proof}
    The above proposition suggests that the more conditional attributes employed, the higher the degree of classification consistency of the fuzzy decision system will be.}

\subsection{Consistency-guided outlier degree}
    In a fuzzy decision system, a fuzzy similarity relation $\widetilde{B}$ can induce a fuzzy partition of $U$, i.e., a set of fuzzy similarity classes. Each fuzzy similarity class $[x_i]_{\widetilde{B}}$ is a fuzzy set, which clearly describes how similar the object $x_i$ is to all objects in $U$. As the aim of outlier detection is to find the minority objects whose behavior is abnormal, any object which is relatively less similar to the other objects, has a higher probability of being an outlier. Therefore, we {define the outlier factor based on the fuzzy similarity class.}     
    \begin{definition}\label{def:OF}
         Let $(U,C\cup d)$ be a fuzzy decision system, $\forall {B}\in C$, the outlier factor generated by the attribute subset $B$ is defined as
        \begin{equation}\label{eq_OF}
            OF_B(x_i)=1-\frac{1}{|U|}\left|[x_i]_{\widetilde{B}}\right|.
            \end{equation}
        \end{definition}
        
    In the above definition, {choosing an appropriate attribute set ${B}$ is crucial for constructing an outlier factor. To achieve this, it is essential to have a quality metric for candidate attributes, and one such measure is the classification consistency defined in the previous section.
    If an attribute subset has greater classification consistency, then it is of higher quality for producing a more separable partition.}    
    Ideally, all the $2^m$ subsets of conditional attributes should be considered for constructing outlier factors. But this procedure is exhaustive and may be too costly and practically prohibitive even for a medium-sized $m$. 
    {Moreover, as reflected by Proposition \ref{prop_subset}, a fine-grained fuzzy relation tends to treat each object as a distinct category, which does not help to distinguish which objects are anomalous. Therefore, following the previous works \cite{li2022ecod, Yuan2023WFRDA, Liu2023FGAS}, this paper also adopts a straightforward solution in that each singleton attribute is taken to construct a similarity class.}

    On this basis, {we can predict the outlier score for each instance by integrating the $OF$s of every attribute and its corresponding consistency} in the form of a weighted summation. 
    \begin{definition}\label{def:OD}
    
    Let $(U,C\cup d)$ be a fuzzy decision system, $\forall x_i\in U$, the consistency-guided outlier degree \text{COD} of $x_i$ is defined as
    \begin{equation}\label{eq_COD}
        \text{COD}(x_i)=\frac{1}{|C|}\sum\limits_{a_k\in C}{OF_{a_k}(x_i) \cdot {\xi}_{{a}_k}(d)}.
        \end{equation}
        \end{definition}

    {To illustrate the computation process of the aforementioned method more clearly, a concrete example is provided in the following.}

\begin{table}[!h]
    \centering
    
    \caption{A data table for patients.}\label{tab:HIS}
    \tabcolsep=6pt
    
    \begin{tabular}{c|cccc|cccc}
    \toprule
    $U$    & $a_1$   & $a_2$  & $a_3$ & $d$      & $a_1$   & $a_2$  & $a_3$ & $d$     \\ \midrule
    $x_1$  & \male   & 38   &  62.5   & Negative & \male   & 0      & 0.503 & Negative\\
    $x_2$  & \male   & 47   &  72.3   & Negative & \male   & 0.692  & 1     & Negative\\
    $x_3$  & \female & 51   &  52.6   & Positive & \female & 1      & 0     & Positive\\
    $x_4$  & \female & 44   &  65.6   & Negative & \female & 0.462  & 0.66  & Negative\\ \bottomrule

    \end{tabular}
    \end{table}
    
\begin{example}
    
    Let $(U, C\cup d)$ be a data table for patients with heterogeneous attributes (as shown in the left of Table~\ref{tab:HIS}), and $U=\{x_1, x_2, x_3, x_4\}$ denotes the persons, $C=\{a_1,a_2,a_3\}$ represents the set of conditional attributes for each patient, where $a_1$, $a_2$ and $a_3$ represent the \textit{gender}, the \textit{age} and the \textit{weight}, respectively. $d$ is the decision attribute indicating whether the person has been diagnosed with a disease.

    We first transform the numerical attributes $a_2$ and $a_3$ into the same magnitude by min-max normalization, the results are shown in the right of Table~\ref{tab:HIS}. Then we construct the label-informed fuzzy similarity relations in the following. 
    
    From the decision attribute $d$, we have the set of positive instances $U^+=\{x_3\}$ and the set of negative instances $U^-=\{x_1,x_2,x_4\}$.
    For the categorical attribute $a_1$, we can easily obtain the fuzzy similarity relation matrix by Eq.~(\ref{eq_rel_ak}) as
    
    $M(\widetilde{a}_1)=\left(\begin{array}{cccc}
        1 & 1 & 0 & 0\\
        1 & 1 & 0 & 0\\
        0 & 0 & 1 & 1\\
        0 & 0 & 1 & 1\\
        \end{array}\right)$.\\
    For the numerical attributes, we first calculate the fuzzy radius by Eq.~(\ref{eq_lamb}) as
    $\lambda_2\approx 0.539$ for $a_2$ and $\lambda_3\approx 0.498$ for $a_3$ .
    Then, we have the fuzzy similarity relation matrix by Eq.~(\ref{eq_rel_ak}) as
    
    $M(\widetilde{a}_2)=\left(\begin{array}{cccc}
        1	 & 0    & 0	    & 0.539\\
        0    & 1	& 0.692	& 0.769\\
        0	 & 0.692& 1 	& 0.462\\
        0.539& 0.769& 0.462	& 1    \\
        \end{array}\right)$, 
    $M(\widetilde{a}_3)=\left(\begin{array}{cccc}
        1	 &0.503	&0	&0.843 \\
        0.503&	1	&0	&0.660 \\
        0	 &0	    &1	&0     \\
        0.843&0.660	&0	&1     \\
        \end{array}\right)$.
        
    Next, we compute the classification consistency for each attribute. From Table~\ref{tab:HIS}, we have the positive and negative class induced by $d$ as $[x^+]_{d}=(0,0,1,0)$, $[x^-]_{d}=(1,1,0,1)$. Then we obtain the following low approximations (refer to Example \ref{ex:low_appr}):
        
    $\underline{\widetilde{a}_1}[x^+]_{d}=(0,0,0,0)$,
    $\underline{\widetilde{a}_1}[x^-]_{d}=(1,1,0,0)$,
    
    $\underline{\widetilde{a}_2}[x^+]_{d}=(0,0,0.308,0)$,
    $\underline{\widetilde{a}_2}[x^-]_{d}=(1,0.308,0,0.539)$,
    
    $\underline{\widetilde{a}_3}[x^+]_{d}=(0,0,1,0)$,
    $\underline{\widetilde{a}_3}[x^-]_{d}=(1,1,0,1)$.\\
    By Definition \ref{def:consistency}, we have
    $\xi_{a_1}(d)=\frac{2}{3}+\frac{0}{1}\approx0.667$, 
    $\xi_{a_2}(d)=\frac{1.846}{3}+\frac{0.308}{1}\approx0.923$, 
    $\xi_{a_3}(d)=\frac{3}{3}+\frac{1}{1}=2$.
    
    In the following, we calculate the outlier factor for each instance. By Definition \ref{def:OF}, we have
    
    $OF_{a_1}(x_1)=1-\frac{1}{|U|}|[x_1]_{\widetilde{a_1}}|=1-\frac{1}{4}\times2=0.5$,
    $OF_{a_2}(x_1)=1-\frac{1}{4}\times1.539\approx0.615$,
    
    $OF_{a_3}(x_1)=1-\frac{1}{4}\times2.346\approx0.414$.\\
    Similarly, we can compute the outlier factor for the other instances.
    
    Finally, the outlier degree of each instance is obtained by integrating the OFs of all attributes and their corresponding consistencies by Definition \ref{def:OD}.
    
    $COD(x_1)=\frac{1}{|C|}\left(OF_{a_1}(x_1)\times\xi_{a_1}(d)+OF_{a_2}(x_1)\times\xi_{a_2}(d)+OF_{a_3}(x_1)\times\xi_{a_3}(d)\right)\\
    \text{\hspace{1.85cm}}=\frac{1}{3}(0.5\times0.667+0.615\times0.923+0.414\times2)\approx0.576$.\\
    Similarly, we have $COD(x_2)\approx0.536$,$COD(x_3)\approx0.753$,$COD(x_4)\approx0.455$.
    \end{example}

\subsection{Binary outlier classification}
    {After associating each object with an outlier score, a binary output is generated to indicate whether an input instance is an outlier or not. This is achieved by determining a threshold $\theta$ as follows.
    
    \begin{definition}\label{def:COD}
        Let $\theta$ be a real-valued threshold, $\forall x_i \in U$, if the outlier score \textit{COD}$(x_i) > \theta$, then $x_i$ is regarded as a consistency-guided outlier.
        \end{definition}

    In the label-informed scenario, where labeled outliers are available, the optimal threshold value is obtained adaptively. A straightforward approach is to utilize the smallest outlier degree of the labeled outliers. Let $U^+$ be the set of labeled outliers, and the optimal threshold ${\theta}^*$ can be determined as the minimum outlier degree of the labeled outliers:
    \begin{equation}\label{eq_theta}
        {\theta}^*=\min_{x_i\in U^+} {COD}(x_i)
    \end{equation}
    }

    The whole procedure of the proposed outlier detection algorithm COD is illustrated in Algorithm \ref{alg:COD}. COD begins by finding an optimal fuzzy radius for each attribute. It then constructs label-informed fuzzy similarity relations and calculates the classification consistency for each attribute. Subsequently, COD computes the outlier factor and the outlier degree for each input instance. Finally, a threshold value is computed, which is used to predict a binary output designating outliers. {Since COD's primary operation involves computing the distance between each pair of instances for each attribute, the worst time complexity for COD is $O(mn^2)$, where $m$ represents the number of attributes and $n$ represents the number of instances in the dataset.}

    \begin{algorithm}[!h]
    \caption{Consistency-guided outlier detection}\label{alg:COD}
    \LinesNumbered
    \KwIn{A set of $n$ objects $U$ with $m$ conditional attributes $C$, a subset of labeled {outliers $U^{+}\subseteq U$}.}
    \KwOut{Outlier set $O$}
    $O \leftarrow \emptyset$\;
    {
    \For{Each $a_k\in C$}{
        % Select candidate negative objects $U^-$ by Eq.(\ref{eq_n_neg})\;
        Compute fuzzy radius $\lambda_k$ by Eq.(\ref{eq_lamb})\;
        Construct fuzzy relation matrix $M({a_k})$ using Eq.~(\ref{eq_rel_ak})\;
        % Select negative objects $U^-$ by Eq.(\ref{eq_n_neg})\;
        % Compute $|[x_i]_{\widetilde{a}_k}|$ for each $x_i \in U^-\cup U^+$ using Eq.~(\ref{eq_cardinality})\;
        Compute the consistency $\xi_{a_k}(d)$ by Definition~(\ref{def:consistency})\;
        }
    }
    \For{Each $x_i\in U$}{
        \For{Each $a_k\in C$}{
        Compute outlier factor $OF_{a_k}(x_i)$ using Eq.~(\ref{eq_OF})\;
        }
        Compute outlier degree $COD(x_i)$ using Eq.~(\ref{eq_COD})\;
        }

    {Compute the threshold $\theta$ by Eq.(\ref{eq_theta})}\;
    \For{Each $x_i\in U$}{
        \If {$COD(x_i)>\theta$}{
        $O \leftarrow O \cup \{x_i\}$\;
        }
    }
    \Return $O$.
    \end{algorithm}

\section{Experiments}
    This section conducts comparison experiments with some state-of-the-art methods to evaluate our proposed algorithm on {20 public datasets, which range over various fields including images, medics, biologics, etc.} All the datasets and source codes are publicly available online\footnote{https://github.com/ChenBaiyang/COD}.

\subsection{Datasets}
    {The experimental datasets include 4 categorical, 5 mixed}, and 11 numerical datasets. The number of samples in a dataset {ranges from 226} to 11183, and the ratio of anomalies varies between {0.8\%} and 35.9\%. The details of the datasets are provided in Table \ref{tab:datasets}. 
    
    \begin{table*}[!h]
        \centering
        \caption{The statistics of the datasets used in the experiments.}\label{tab:datasets}
        \resizebox{0.95\textwidth}{!}{
        \begin{tabular}{cccccccc}
        \toprule
        {No.} & Datasets    & \# Samples & \# Attributes & \# Outlier & \% Outlier & Category & Data type\\
        \midrule
        1  & Annealing   & 798   & 38  & 42  & 5.3\%  & Physical    & Mixed     \\
        2  & {Annthyroid}  & 7200  & 6   & 534 & 7.4\%  & Healthcare  & Numerical \\
        3  & Arrhythmia  & 452   & 279 & 66  & 14.6\% & Medical     & Mixed     \\
        4  & Audiology   & 226   & 69  & 57  & 25.2\% & Healthcare  & Categorical\\
        5  & Breast      & 286   & 9   & 85  & 29.7\% & Medical     & Categorical\\
        6  & {Breastw}     & 683   & 9   & 239 & 35.0\% & Healthcare  & Numerical \\
        7  & {Cardio}      & 1831  & 21  & 176 & 9.6\%  & Healthcare  & Numerical \\
        8  & CreditA     & 425   & 15  & 42  & 9.9\%  & Commercial  & Mixed     \\
        9  & Ionosphere  & 351   & 32  & 126 & 35.9\% & Oryctognosy & Numerical \\
        10 & {Mammography} & 11183 & 6   & 260 & 2.3\%  & Healthcare  & Numerical \\
        11 & Mushroom1   & 4429  & 22  & 221 & 5.0\%  & Botany      & Categorical\\
        12 & Mushroom2   & 4781  & 22  & 573 & 12.0\% & Botany      & Categorical\\
        13 & {Musk}        & 3062  & 166 & 97  & 3.2\%  & Chemistry   & Numerical \\
        14 & {Optdigits}   & 5216  & 64  & 150 & 2.9\%  & Image       & Numerical \\
        15 & PageBlocks  & 5393  & 10  & 510 & 9.5\%  & Document    & Numerical \\
        16 & Sick        & 3613  & 29  & 72  & 2.0\%  & Medical     & Mixed     \\
        17 & Thyroid     & 9172  & 28  & 74  & 0.8\%  & Medical     & Mixed     \\
        18 & Waveform    & 3443  & 21  & 100 & 2.9\%  & Physical    & Numerical \\
        19 & Wilt        & 4819  & 5   & 257 & 5.3\%  & Botany      & Numerical \\
        20 & {Yeast}       & 1484  & 8   & 507 & 34.2\% & Biology     & Numerical \\
        \bottomrule
        \end{tabular}}
        \end{table*}

\subsection{Experiment settings}
    Following previous works \cite{zhao2019PyOD, han2022ADbench, Jiang2023WSAD}, we adapt the unsupervised detection algorithms for predicting the new coming data, i.e., a fixed number (e.g., 5) of labeled outliers (unlabeled data are taken as negative instances) are used as validation set for tuning their hyper-parameters. For SSOD, we use various levels of supervision data {for training and the rest for testing. Specifically, the number of labeled outliers ranges from 5 to 30. {For COD, the number of candidate negative instances $N_-$ is set to 100.} Each experiment is repeated 10 times independently} and the average is reported.

\subsection{Comparison methods and settings}
    We select the following 10 baseline methods to evaluate the comparison performances of the detection algorithms, including 5 unsupervised and 5 semi-supervised ones. Most of them, except for WFRDA, are implemented by the AD library PyOD \cite{zhao2019PyOD, han2022ADbench} { and WSAD \cite{Jiang2023WSAD}}.

    \textbf{Unsupervised detection methods:}
    \begin{itemize}
    \item IForest (2008) \cite{liu2008isolation}: An ensemble model which isolates instances and constructs outlier scores using binary trees. The number of base estimators is tuned among \{5, 10, 50, 100, 500\}.
    \item SOD (2009) \cite{Kriegel2009SOD}: A subspace learning-based model that recognizes outliers by constructing subspaces of high-dimensional data. The parameter number of neighbors is found among \{2, 3, 5, 10, 20, 50\}.
    \item DeepSVDD (2018) \cite{Ruff2018DeepSVDD}: A deep learning-based model that learns embeddings of objects by minimizing the volume of a data hypersphere. The hyper-parameter number of epochs is selected in \{20, 50, 100, 200\}.
    \item ECOD (2022) \cite{li2022ecod}: A probabilistic model that predicts the empirical cumulative distribution of each attribute with the assumption that anomalies lie at the tails of the distribution. This model is parameter-free.
    \item WFRDA (2023) \cite{Yuan2023WFRDA}: A FRS-based algorithm that uses fuzzy-rough density and fuzzy entropy to describe the outlier degree of data. The parameter fuzzy radius is chosen from 0.1 to 2.0 with a stepsize of 0.1. 
    \end{itemize}

    \textbf{Semi-supervised approaches:}
    \begin{itemize}
    \item REPEN (2018) \cite{Pang2018RAMODO}: A deep learning-based model that uses a ranking approach to learn feature representations of ultra-high dimensional data. 
    \item DevNet (2019) \cite{Pang2019DevNet}: A rank-based method that introduces an end-to-end deep framework and incorporates a Gaussian prior and deviation loss to detect outliers. The confidence margin parameter $a$ is set to 5.
    \item DeepSAD (2020) \cite{Ruff2020DeepSAD}: A deep learning-based model that penalizes the inverse of the distances of outliers such that they are projected away from normal data. The balancing parameter $\eta$ in the objective function is fixed to 1.
    \item FEAWAD (2022) \cite{Zhou2022FEAWAD}: A rank-based method that utilizes a deep autoencoder to fit the normal data.
    \item PReNet (2023) \cite{Pang2023PReNet}: A rank-based method that scores the outliers by ordinal regression without involving any assumptions about outlier scores.
    \end{itemize}

\subsection{Evaluation metrics}
    We assess the detection methods by two popular metrics: AUC-ROC (Area Under Curve-Receiver Operating Characteristic) and AUC-PR (Area Under Curve-Precision Recall). They are both computed based on the outlier scores of objects. A larger AUC-ROC or AUC-PR value indicates better detection performance. {It is worth noting that AUC-ROC measures the overall ability of a model to discriminate between positive (outliers) and negative (inliers) instances and is less sensitive to class imbalance.
    AUC-PR, on the other hand, focuses on the performance of the model in capturing true outliers while minimizing false positives, and is more informative when dealing with imbalanced datasets with rare outliers. 
    The paired Wilcoxon signed-rank test \cite{Pang2023PReNet} is employed to assess the statistical significance of COD in comparison to its rival methods.}
    
\subsection{Experimental results and analysis}
\subsubsection{Comparative performance of detectors in real-world datasets}
    This part examines the detection performances in real-world datasets where there are a lot of unlabeled data and {a few labeled outliers are available}. Therefore, only {5 labeled outliers (other supervision levels are further examined in the following part) are taken to train the detectors, and all the unlabeled data are taken as the test set. The labeled outlier ratio of the experimental datasets ranges from 0.87\% to 11.9\%}. To ensure a fair comparison with unsupervised methods, we use the same training set with semi-supervised models to tune hyper-parameters, and the same testing set to evaluate the unsupervised algorithms.

    \begin{table*}[!ht]
    \centering
    \caption{AUC-ROC performances of comparison methods on 20 public datasets (@ 5 labeled outliers). The highest score is marked in bold.}\label{AUC}
    \resizebox{\textwidth}{!}{
    \begin{tabular}{c|ccccc|cccccc}
    \toprule
    Datasets    & IForest     & SOD            & DeepSVDD & ECOD           & WFRDA          & DeepSAD & REPEN     & DevNet         & FEAWAD      & PReNet         & COD                  \\
    \midrule
    Annealing   & 0.802          & 0.581          & 0.519 & 0.795          & 0.729          & 0.584       & 0.751       & {0.851}    & 0.832       & 0.847          & \textbf{0.857} \\
    {Annthyroid}  & 0.828          & 0.792          & 0.753 & 0.789          & 0.637          & 0.760       & 0.693       & {0.935}    & 0.794       & 0.931          & \textbf{0.986} \\
    Arrhythmia  & 0.798          & 0.731          & 0.619 & {0.807}    & 0.755          & 0.612       & 0.759       & 0.596          & 0.634       & 0.626          & \textbf{0.811} \\
    Audiology   & 0.774          & 0.563          & 0.552 & {0.837}    & 0.834          & 0.592       & 0.633       & 0.619          & 0.616       & 0.639          & \textbf{0.877} \\
    Breast      & \textbf{0.673} & 0.627          & 0.581 & {0.659}    & 0.657          & 0.570       & 0.649       & 0.507          & 0.528       & 0.493          & 0.649          \\
    {Breastw}     & 0.979          & 0.947          & 0.924 & {0.991}    & \textbf{0.992} & 0.838       & 0.987       & 0.764          & 0.827       & 0.704          & 0.989          \\
    {Cardio}      & {0.927}    & 0.634          & 0.522 & \textbf{0.935} & 0.914          & 0.733       & 0.889       & 0.775          & 0.795       & 0.776          & 0.871          \\
    CreditA     & {0.978}    & 0.844          & 0.865 & \textbf{0.991} & 0.975          & 0.739       & 0.948       & 0.805          & 0.709       & 0.813          & 0.963          \\
    Ionosphere  & 0.843          & \textbf{0.900} & 0.770 & 0.728          & 0.784          & 0.818       & {0.846} & 0.637          & 0.617       & 0.597          & 0.785          \\
    {Mammography} & 0.866          & 0.798          & 0.615 & \textbf{0.906} & 0.839          & 0.870       & 0.872       & 0.896          & 0.857       & {0.903}    & 0.855          \\
    {Mushroom1}   & 0.931          & \textbf{0.983} & 0.542 & 0.949          & {0.971}    & 0.942       & 0.929       & 0.902          & 0.910       & 0.886          & 0.964          \\
    {Mushroom2}   & 0.879          & 0.883          & 0.628 & 0.866          & 0.882          & {0.904} & 0.888       & 0.892          & 0.828       & 0.890          & \textbf{0.976} \\
    {Musk}        & 0.999          & 0.738          & 0.669 & 0.959          & {0.999}    & 0.989       & 0.999       & 0.859          & 0.929       & 0.973          & \textbf{1.000} \\
    {Optdigits}   & 0.739          & 0.492          & 0.667 & 0.606          & 0.942          & 0.869       & 0.609       & 0.988          & {0.994} & \textbf{0.999} & 0.989          \\
    PageBlocks  & {0.904}    & 0.658          & 0.708 & \textbf{0.914} & 0.868          & 0.888       & 0.902       & 0.800          & 0.730       & 0.776          & 0.889          \\
    Sick        & 0.801          & 0.698          & 0.535 & 0.844          & 0.837          & {0.859} & 0.748       & 0.837          & 0.859       & 0.809          & \textbf{0.902} \\
    Thyroid     & 0.662          & 0.588          & 0.534 & 0.579          & 0.516          & 0.663       & 0.648       & 0.730          & 0.723       & {0.735}    & \textbf{0.740} \\
    Waveform    & 0.684          & 0.634          & 0.628 & 0.601          & 0.699          & 0.733       & 0.668       & \textbf{0.859} & 0.821       & {0.854}    & 0.749          \\
    Wilt        & 0.489          & 0.586          & 0.460 & 0.394          & 0.331          & 0.696       & 0.341       & {0.909}    & 0.867       & \textbf{0.928} & 0.583          \\
    {Yeast}       & 0.430          & 0.449          & 0.491 & 0.445          & 0.395          & 0.474       & 0.383       & \textbf{0.602} & 0.573       & {0.600}    & 0.446          \\
    \midrule
    Average     & {0.799}    & 0.706          & 0.629 & 0.780          & 0.778          & 0.757       & 0.757       & 0.788          & 0.772       & 0.789          & \textbf{0.844} \\
    \midrule
    p-value     & 0.012          & 0.001          & 0.001 & 0.01           & 0.001          & 0.001       & 0.002       & 0.02           & 0.009       & 0.027          & -             \\
    \bottomrule
    \end{tabular}}
    \end{table*}

    \begin{table*}[!h]
    \centering
    \caption{AUC-PR performances of comparison methods on 20 public datasets (@ 5 labeled outliers). The highest score is marked in bold.}\label{PR}
    \resizebox{\textwidth}{!}{
    \begin{tabular}{c|ccccc|cccccc}
    \toprule
    Datasets    & IForest & SOD          & DeepSVDD& ECOD         & WFRDA          & DeepSAD& REPEN      & DevNet         & FEAWAD         & PReNet         & COD             \\
    \midrule
    Annealing   & 0.211       & 0.076          & 0.091 & 0.199          & 0.110       & 0.085          & 0.115 & {0.494}    & 0.482          & \textbf{0.506} & 0.269          \\
    {Annthyroid}  & 0.325       & 0.233          & 0.209 & 0.269          & 0.169       & 0.253          & 0.198 & {0.616}    & 0.370          & 0.592          & \textbf{0.784} \\
    Arrhythmia  & 0.436       & 0.314          & 0.233 & {0.448}    & 0.373       & 0.227          & 0.373 & 0.251          & 0.301          & 0.279          & \textbf{0.502} \\
    Audiology   & 0.572       & 0.335          & 0.292 & 0.649          & {0.706} & 0.341          & 0.400 & 0.542          & 0.473          & 0.515          & \textbf{0.816} \\
    Breast      & 0.446       & 0.385          & 0.378 & {0.467}    & 0.462       & 0.355          & 0.418 & 0.320          & 0.338          & 0.318          & \textbf{0.469} \\
    {Breastw}     & 0.945       & 0.861          & 0.805 & \textbf{0.984} & {0.979} & 0.777          & 0.967 & 0.791          & 0.839          & 0.735          & 0.969          \\
    {Cardio}      & 0.553       & 0.221          & 0.160 & 0.562          & 0.516       & 0.310          & 0.474 & 0.543          & \textbf{0.621} & 0.577          & {0.595}    \\
    CreditA     & 0.815       & 0.459          & 0.479 & \textbf{0.916} & {0.855} & 0.331          & 0.643 & 0.621          & 0.501          & 0.650          & 0.845          \\
    Ionosphere  & {0.780} & \textbf{0.896} & 0.589 & 0.638          & 0.655       & 0.764          & 0.777 & 0.611          & 0.585          & 0.599          & 0.669          \\
    {Mammography} & 0.215       & 0.116          & 0.056 & 0.432          & 0.094       & 0.314          & 0.177 & {0.507}    & 0.275          & \textbf{0.515} & 0.258          \\
    {Mushroom1}   & 0.432       & 0.675          & 0.179 & 0.483          & {0.899} & 0.675          & 0.411 & 0.835          & 0.852          & 0.830          & \textbf{0.905} \\
    {Mushroom2}   & 0.382       & 0.610          & 0.398 & 0.365          & 0.673       & 0.610          & 0.401 & {0.835}    & 0.778          & 0.828          & \textbf{0.929} \\
    {Musk}        & 0.976       & 0.123          & 0.307 & 0.504          & {0.981} & 0.893          & 0.972 & 0.802          & 0.902          & 0.945          & \textbf{1.000} \\
    {Optdigits}   & 0.056       & 0.026          & 0.053 & 0.033          & 0.366       & 0.206          & 0.036 & {0.968}    & 0.842          & \textbf{0.985} & 0.843          \\
    PageBlocks  & 0.493       & 0.298          & 0.339 & 0.517          & 0.373       & 0.555          & 0.538 & \textbf{0.580} & 0.522          & {0.568}    & 0.452          \\
    Sick        & 0.056       & 0.064          & 0.043 & 0.063          & 0.059       & 0.107          & 0.051 & {0.315}    & \textbf{0.355} & 0.302          & 0.267          \\
    Thyroid     & 0.015       & 0.013          & 0.011 & 0.009          & 0.008       & 0.018          & 0.011 & {0.051}    & 0.046          & \textbf{0.067} & 0.049          \\
    Waveform    & 0.051       & 0.054          & 0.070 & 0.038          & 0.047       & \textbf{0.235} & 0.053 & 0.191          & 0.185          & {0.210}    & 0.063          \\
    Wilt        & 0.047       & 0.065          & 0.045 & 0.041          & 0.036       & 0.106          & 0.036 & 0.339          & \textbf{0.442} & {0.393}    & 0.061          \\
    {Yeast}       & 0.319       & 0.307          & 0.328 & 0.331          & 0.324       & 0.319          & 0.287 & \textbf{0.431} & 0.409          & {0.427}    & 0.335          \\
    \midrule
    Average     & 0.406       & 0.307          & 0.253 & 0.397          & 0.434       & 0.374          & 0.367 & 0.532          & 0.506          & {0.542}    & \textbf{0.554} \\
    \midrule
    p-value     & 0.001       & 0.001          & 0.001 & 0.006          & 0.001       & 0.001          & 0.001 & 0.298          & 0.139          & 0.378          & -             \\
    \bottomrule
    \end{tabular}}
    \end{table*}
    
    The overall experimental results on 20 datasets are listed in Tables ~\ref{AUC} and ~\ref{PR} (@ 5 labeled outliers). %Figure \ref{boxplot} shows the boxplots of the average AUC-ROC of 15 datasets. 
    One can observe that COD achieves the best AUC-ROC on 8 datasets, including Annealing, {Annthyroid}, Arrhythmia, Audiology, {Mushroom2}, {Musk}, Sick and Thyroid. 
    {The improvements of COD over 10 comparison methods are statistically significant at the 95\% confidence level.
    In the case of AUC-PR, COD demonstrates superior performance on 7 datasets, namely {Annthyroid}, Arrhythmia, Audiology, Breast, {Mushroom1}, {Mushroom2}, and {Musk}. Notably, COD's enhancements over 7 out of 10 comparison methods (with the exceptions of DevNet, FEAWAD, and PReNet) are statistically significant at the 99\% confidence level. }
    The results confirm the effectiveness of COD in detecting outliers with very limited labeled data from a wide range of applications.
        
    In addition, COD does not perform as well as other semi-supervised methods on { Waveform or Wilt. For instance, DevNet beats COD on Waveform by 14.7\% on AUC-ROC, PReNet achieves 59.2\% higher AUC-ROC than COD on Wilt. Similar results can be found in terms of AUC-PR.} 
    The reason may be that COD's assumption of similarity-based outliers fails in this case where there is an unknown outlier type. 

    \begin{table*}[!h]
    \centering
    \caption{Average performance of detectors w.r.t. data type (@ 5 labeled outliers).}\label{tab:datatype}
    \resizebox{\textwidth}{!}{
    \begin{tabular}{cc|ccccc|cccccc}
    \toprule
    Metric                   & Data type & IForest     & SOD   & DeepSVDD & ECOD  & WFRDA       & DeepSAD & REPEN & DevNet      & FEAWAD & PReNet         & COD            \\
    \midrule
    \multirow{3}{*}{AUC-ROC} & Mixed       & {0.808} & 0.689 & 0.614    & 0.803 & 0.762       & 0.691   & 0.771 & 0.764       & 0.751  & 0.766          & \textbf{0.855} \\
                             & Categorical & 0.814       & 0.764 & 0.576    & 0.828 & {0.836} & 0.752   & 0.775 & 0.730       & 0.720  & 0.727          & \textbf{0.866} \\
                             & Numerical   & 0.790       & 0.693 & 0.655    & 0.752 & 0.764       & 0.788   & 0.745 & 0.820       & 0.800  & {0.822}    & \textbf{0.831} \\
    \midrule
    \multirow{3}{*}{AUC-PR}  & Mixed       & 0.307       & 0.185 & 0.172    & 0.327 & 0.281       & 0.154   & 0.239 & 0.346       & 0.337  & {0.361}    & \textbf{0.386} \\
                             & Categorical & 0.458       & 0.501 & 0.312    & 0.491 & {0.685} & 0.495   & 0.407 & 0.633       & 0.610  & 0.623          & \textbf{0.780} \\
                             & Numerical   & 0.433       & 0.291 & 0.269    & 0.395 & 0.413       & 0.430   & 0.411 & {0.580} & 0.545  & \textbf{0.595} & 0.548         \\
    \bottomrule
    \end{tabular}}
    \end{table*}
    
\subsubsection{Detection performances in handling {heterogeneous data}} 
    As real-world datasets often include heterogeneous attributes that take different types of values{, this} part investigates the model performances in the scenarios with the categorical data types. Table \ref{tab:datatype} summarizes {the average results of detectors on three types of data, i.e., mixed, categorical and numerical data. COD achieves the best average scores with both AUC-ROC and AUC-PR across 5 mixed datasets and 4 categorical datasets, and also performs the best on average of numerical datasets in AUC-ROC. For example, COD performs better than PReNet by 11.6\% on mixed datasets in AUC-ROC.} This is due to the COD's utilization of FRS to directly process nominal attributes without introducing {extra data assumptions}, which is pretty effective in outlier detection in mixed attribute data. {However, we also observed that COD's average AUC-PR score on numerical datasets is relatively lower than that of PReNet (8.6\%) or DevNet (5.8\%). This may be due to the different characteristics or distributions of outliers in some cases (e.g., Waveform and Wilt).}

\subsubsection{Model performances in various levels of label supervision}
    Since it is hard to obtain labeled outliers in most outlier detection tasks, this part aims to study the detection performances of the comparison methods with respect to various levels of label supervision, and investigate the improvements of the semi-supervised methods gained from partial labels compared with the best-unsupervised algorithm. {The number of labeled outliers for training is set to vary from 5 to 30, while other unlabeled data are treated as normal instances during training. This setting complies with the previous works \cite{Pang2018RAMODO, Pang2019DevNet, Pang2023PReNet} which is viewed as training the detectors with noise. However, we do not deliberately manipulate the value of outlier contamination, since it is more practical in real-world scenarios with various levels of outlier contamination.}
    
    \begin{figure*}[!h]
        \centering
        \includegraphics[width=\textwidth]{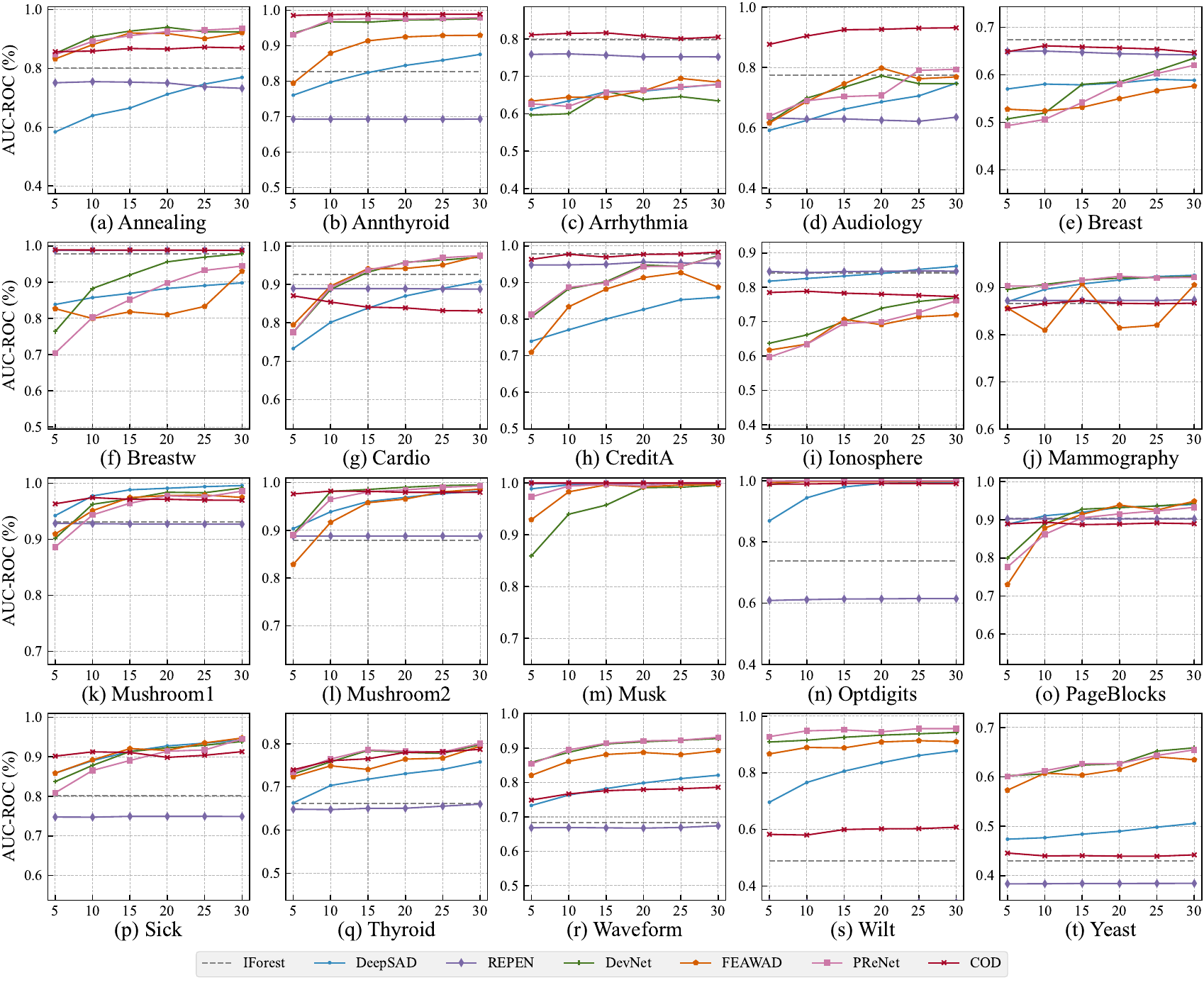}
        \caption{AUC-ROC scores w.r.t. multiple levels of supervision on 20 experimental datasets. The best-unsupervised method is IForest which achieves the highest average AUC-ROC score.}
        \label{fig:train_auc}
        \end{figure*}
        
    Figure \ref{fig:train_auc} depicts the AUC-ROC performances of the comparison methods w.r.t. multiple levels of supervision.%, and Figure \ref{fig:train_ave} shows the average on all datasets.
    These semi-supervised algorithms tend to improve with the increase of labeled {outliers}, as more labeled data provide more information for training. However, the AUC-ROC of some comparison deep learning-based models falls with more labeled data in some datasets, e.g., {FEAWAD and DevNet on Audiology, COD on {Cardio}.} This may be caused by the different distribution of outliers, and they may degrade detection performance when the added data have anomalous behavior that conflicts with other data. 
    However, we also noticed that COD does not improve significantly with an increasing level of supervision in some cases, e.g., {COD's improvements on Waveform, Wilt and {Yeast} are much lower than that of PReNet or FEAWAD. This may be due to the fact that COD leverages only a few unlabeled data (e.g., 100) to learn the outlier behavior, which makes it difficult to obtain a complete distribution of data in some scenarios, thus limiting detection performance.
        
    %Moreover, one can see, from Figure \ref{fig:train_ave}, {that }COD is the most data-efficient algorithm. It performs the best on an average of 15 datasets with the lowest number of labeled data. Impressively, COD needs only one-quarter of labeled data to achieve comparable results to the comparison methods DevNet, FEAWAD and PReNet.
    Furthermore, most semi-supervised detectors perform better than the best unsupervised method IForest when only a few labeled outliers are available. For instance, DeepSVD, DevNet, FEAWAD and achieve higher AUC-ROC scores than IForest on 13 out of 20 datasets at 30 labeled outliers, PReNet (COD) beats IForest on 15 (16) out of 20 datasets at the same level of supervision.}
    COD's advantage is due to its adoption of classification consistencies between attributes and decisions and its use to guide the outlier scoring. This allows COD to integrate unlabeled outlier factors and supervised guidance to construct outlier scores with the very limited labeled data.

    \begin{figure*}[!h]
        \centering
        \includegraphics[width=\textwidth]{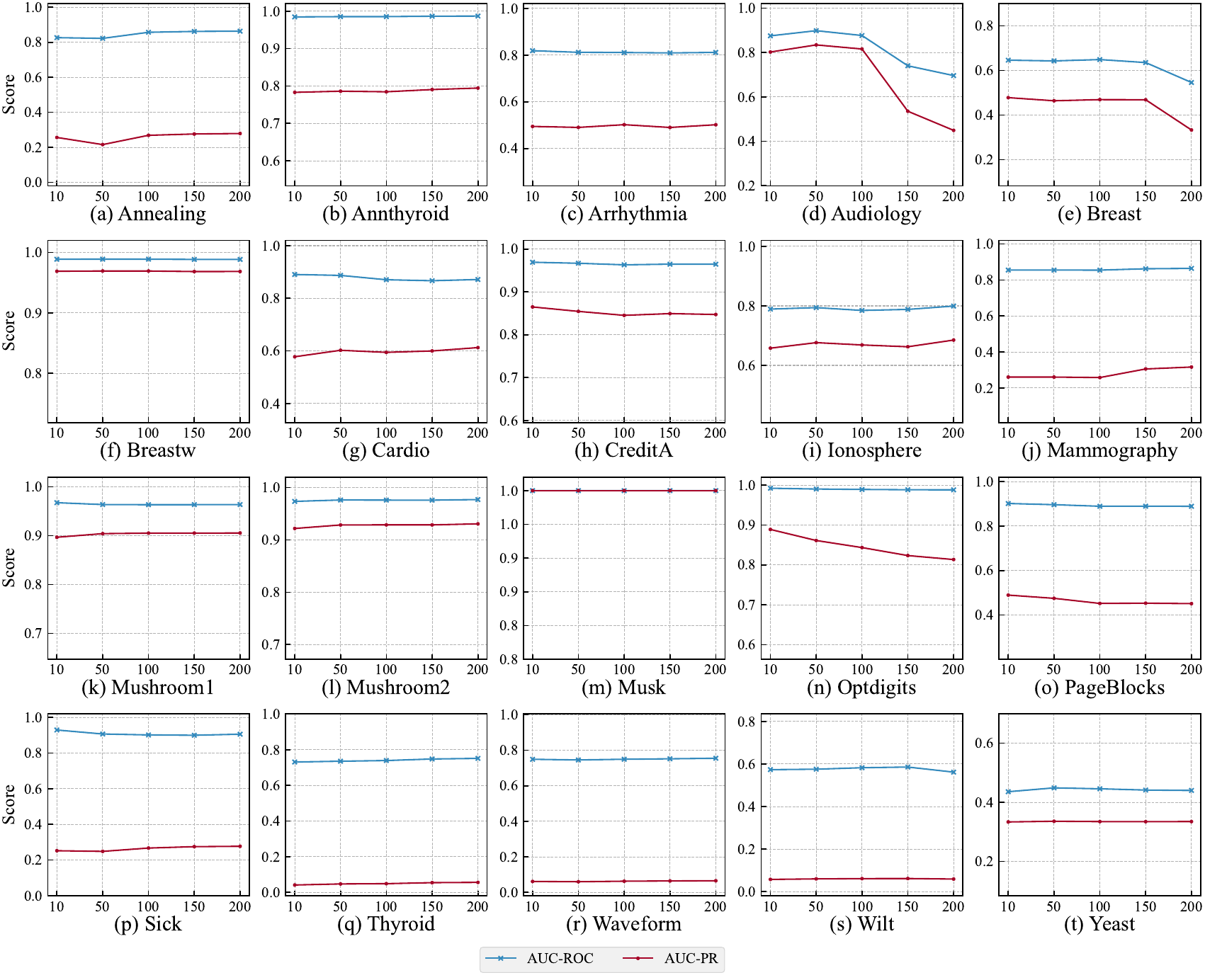}
        \caption{COD's performances w.r.t. the number of selected negative instances.}
        \label{fig:N_neg}
        \end{figure*}
        
\subsubsection{Detection performances w.r.t. the number of candidate negative instances}
    As COD introduces a heuristic method to collect normal objects, the number of candidate negative instances $N_-$ may have an influence on its detection performance. This part investigates the detection performances of COD w.r.t. the number of candidate negative instances. We tune $N_-$ among $\{10, 50, 100, 150, 200\}$ and report both AUC-ROC and AUC-PR at 5 labeled outliers. Figure \ref{fig:N_neg} depicts the results of COD. As the change of $N_-$, both COD's AUC-ROC and AUC-PR keep stable or slightly fluctuate in most datasets. 
    %However, one can observe that the two curves are not consistent in some cases, e.g., when the AUC-ROC improves on Waveform and Wilt, the AUC-PR presents a straight line. Conversely, the AUC-PR improves on Mushroom2, Optdigits and PageBlocks, the AUC-ROC stays the same.
    { Notably, the performance on the Musk dataset is distinct, with the two curves completely overlapping across all parameter settings. This occurs because COD achieves maximum scores for both metrics (i.e., 1) throughout all tested values of $N_-$. Therefore, COD} is robust to the choice of the number of selected negative instances, and it does not require a large number of negative instances to achieve good performance.

\section{Conclusion}
    {This paper proposes a novel consistency-guided outlier detection method (COD) for mixed data with the FRS theory.
    COD characterizes outliers with the fuzzy similarity class and introduces the classification consistency to guide the scoring of outliers, which improves the accuracy and efficiency of outlier detection with very limited labeled data.
    The label-informed fuzzy similarity relation designed in COD enables a downstream task-guided approach to determine the optimal fuzzy radius for the representation of heterogeneous data. The proposed algorithm may have the potential to advance the application of the FRS theory in real-world scenarios.
    Experimental results on various types of public datasets demonstrate the effectiveness of COD} in handling mixed attribute data.    
    Our future work will resort to identifying the types of outliers and further utilizing the implicit knowledge from available data to improve the model performance.    

\section{Acknowledgement}
This work was supported by the National Natural Science Foundation of China (62306196 and 62376230), Sichuan Science and Technology Planning Project (2023YFQ0020), and the Fundamental Research Funds for the Central Universities (YJ202245).

\bibliographystyle{model1-num-names} 
\bibliography{mybib}

\end{document}